%% file: main.tex
\documentclass[letterpaper]{article} %
\usepackage[preprint]{aaai2027}  %
\usepackage[hyphens]{url}  %
\usepackage{graphicx} %
\usepackage{natbib}  %
\usepackage{caption} %
\usepackage{amsmath}                      %
\usepackage{amssymb}                      %
\usepackage{amsthm}                       %
\newtheorem{proposition}{Proposition}[section]
\newtheorem{lemma}{Lemma}[section]
\usepackage{booktabs}                     %
\usepackage{multirow}                     %
\usepackage{xcolor}                       %
\graphicspath{{figures/}}                 %

\newcommand{\methodname}{PeekKV}          %

\title{PeekKV: The First Content Token Already Knows Which Heads Matter}
\author{Minyoung Go, Yongjun Park}
\affiliations{
    Yonsei University\\   %
    minyoung.go@yonsei.ac.kr \\
    yongjunpark@yonsei.ac.kr
}

\begin{document}

\maketitle

\begin{abstract}
\input{sections/00-abstract}
\end{abstract}

\input{sections/01-introduction}
\input{sections/02-background}
\input{sections/03-method}
\input{sections/04-experiments}
\input{sections/06-conclusion}

\bibliography{reference}   %

\appendix
\setcounter{equation}{0}
\renewcommand{\theequation}{S\arabic{equation}}
\input{supplementary/sections/A-proofs}
\input{supplementary/sections/B-implementation}
\input{supplementary/sections/C-reproducibility}
\input{supplementary/sections/D-results}

\end{document}

%% file: sections/00-abstract.tex
The size of the KV cache grows linearly with context length, making it a major memory bottleneck in LLM serving. Motivated by recent progress in identifying the functional roles of attention heads, a growing body of work compresses the KV cache non-uniformly across heads. Existing methods either fix the important heads via offline profiling or select them using attention signals from the prefill stage. In both cases, however, accuracy fluctuates substantially across workloads. While investigating the cause of this instability, we find that the active heads shift between the prefill and decode stages even within a single input. Along the decode path, by contrast, the active-head composition remains stable across content tokens once format tokens are filtered out. Based on this observation, we propose \methodname{}. \methodname{} observes a single content token immediately after prefill — a one-token peek — to jointly obtain head-group scores and token-importance scores. Guided by these scores, \methodname{} allocates KV cache budgets to head groups in proportion to their group scores and retains only the top-scoring tokens within each budget. \methodname{} requires no offline profiling or training and thus applies directly to unseen workloads and new models. Across four public benchmarks, \methodname{} achieves the highest accuracy in more than half of the per-model comparisons against state-of-the-art methods, and stays closest to the full-cache accuracy on RULER and LongBench. On Llama-3.1-8B-Instruct, \methodname{} retains 95.7\% of the full-cache accuracy at a 10\% memory budget on RULER with 16K contexts and 95.9\% at a 1.5\% budget on LongBench.

%% file: sections/01-introduction.tex
\section{Introduction}
\label{sec:intro}

As LLMs move beyond simple chat to serve as agents, input contexts have grown longer, and it has been necessary to maintain states across long-running tasks. For this, the KV cache scales with context length, straining both the memory capacity and bandwidth of serving hardware.

Efforts to reduce the KV cache without sacrificing accuracy have proceeded along several axes. Along the token axis, H2O~\citep{h2o}, TOVA~\citep{tova}, and SnapKV~\citep{snapkv} evict tokens based on attention scores. PyramidKV~\citep{pyramidkv} and ThinK~\citep{think} extend compression to the layer and dimension axes, respectively. Rather than evicting tokens, KIVI~\citep{kivi} and TurboQuant~\citep{turboquant} compress the cache via quantization.

In parallel, mechanistic evidence has accumulated on the functional roles of individual attention heads, including induction heads~\citep{inductionheads}, retrieval heads~\citep{retrievalhead}, and function vector heads~\citep{functionvector}. Building on these findings, DuoAttention~\citep{duoattention}, RazorAttention~\citep{razorattention}, and HeadKV~\citep{headkv} allocate large token budgets to important induction and retrieval heads while aggressively pruning the rest.

However, these head-level eviction approaches face several limitations. First, they require costly per-model offline profiling to determine which heads to preserve. Second, which heads matter varies across workloads, so statically fixing head importance and budgets shows suboptimal results. To address this, AdaKV~\citep{adakv} adaptively allocates KV budgets across heads using attention distributions from the prefill stage. In our measurements, however, this adaptive approach identifies important heads even less accurately than its static counterpart~\citep{headkv} on ten of the twelve workload categories we evaluate (Figure~\ref{fig:headworkload}a). Indeed, we observe that the active heads shift between the prefill and decode stages even within a single input (Section~\ref{sec:background}). So long as the two stages activate different heads, any selection based on prefill signals is misaligned with the heads actually needed at decode.

We therefore propose \methodname{}, which reads this signal directly from the decode path. \methodname{} lets the model briefly continue generation immediately after prefill, skips format tokens, and observes per-head attention signals only at the first content token encountered (one-token peek). A single peek thus yields two kinds of scores at once. \methodname{} allocates KV budgets to head groups in proportion to their group scores, and the output-perturbation ($\Delta o$) scores determine tokens retained by each group.

The resulting allocation adapts to each workload differently. On recall-dominant tasks, induction and retrieval heads score high and receive large KV budgets. On other tasks such as tool calling or code generation, budgets concentrate on the heads each task requires, and the rest are compressed to a minimal budget. Since which heads receive budgets is determined at runtime for each input, \methodname{} can be applied directly to new workloads without offline profiling or training. This score-proportional budget allocation concentrates capacity on a few important heads, keeping \methodname{} competitive even on long-context retrieval, where pure eviction methods are typically not sufficienlty effective.

We evaluate \methodname{} on four public benchmarks spanning long-context retrieval as well as text and code generation. On Llama-3.1-8B-Instruct, \methodname{} retains 95.7\% of the full-cache accuracy at a 10\% memory budget on RULER, and 95.9\% at a 1.5\% budget on LongBench. The overhead of the one-token peek is limited to deferring compression by a few decoding steps, during which peak memory remains at the full-cache level.

Our contributions are as follows.
\begin{itemize}
  \item \textbf{(C1)} Going beyond the observation that head activity fluctuates even within a single input~\citep{dynretrieval}, we present the first evidence, to the best of our knowledge, that this fluctuation concentrates at the prefill--decode boundary and on format tokens, whereas the active heads remain stable across content tokens.
  \item \textbf{(C2)} We propose a method that obtains both head-group and output-perturbation ($\Delta o$) scores from a single one-token peek, without offline profiling or training. The group scores set each group's KV budget, and the $\Delta o$ scores decide which tokens it retains.
  \item \textbf{(C3)} Across four public benchmarks, \methodname{} achieves the highest accuracy in more than half of the per-model comparisons and stays closest to the full-cache accuracy on RULER and LongBench.
\end{itemize}

%% file: sections/02-background.tex
\section{Background and Key Observations}
\label{sec:background}

\begin{figure}[t]
  \centering
  \includegraphics[width=\linewidth]{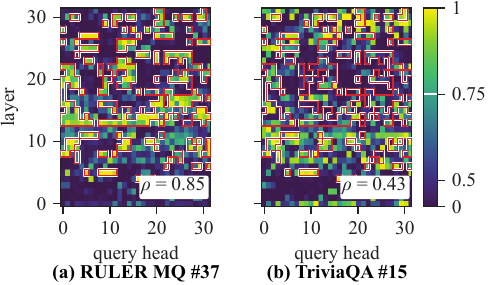}
  \caption{Example decode-time head signal maps. Red outlines mark the top-25\% heads statically selected by HeadKV, and $\rho$ is the rank correlation with the HeadKV map.}
  \label{fig:headheatmap}
\end{figure}

\begin{figure}[t]
  \centering
  \includegraphics[width=\linewidth]{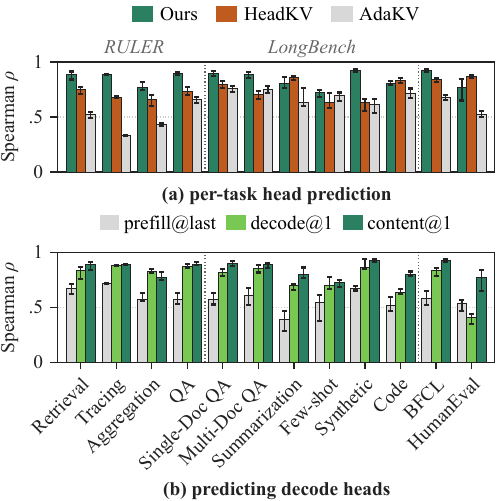}
  \caption{Per-item agreement with the full-decode head ranking.}
  \label{fig:headworkload}
\end{figure}

\paragraph{Head selection.}
\citet{inductionheads} identifies induction heads, which detect repeated patterns in context and copy the next token, and presents correlational evidence that they emerge naturally during training and are linked to in-context learning. Extending this analysis, subsequent work characterizes additional head types, including retrieval heads~\citep{retrievalhead} and function vector heads~\citep{functionvector}.

Among these, retrieval heads are reported to drive search and recall within the context~\citep{retrievalhead}. Exploiting this property, DuoAttention~\citep{duoattention} and RazorAttention~\citep{razorattention} show that retaining the full context for retrieval heads largely preserves recall even when the remaining heads are compressed or pruned. HeadKV~\citep{headkv} allocates KV budgets non-uniformly across heads according to offline-profiled retrieval scores. Meanwhile, because important heads vary with the workload, other studies argue for identifying them dynamically. AdaKV~\citep{adakv} instantiates this idea as a plug-in that adapts per-head budgets to prefill-time attention distributions. \citet{dynretrieval} further reports that retrieval head activity fluctuates across decoding timesteps and is therefore difficult to capture with static selection.

Our own measurements reproduce this variability in head importance at the level of individual inputs. As shown in Figure~\ref{fig:headheatmap}(a), retrieval heads play the leading role on recall-dominant tasks, which explains why heavily compressing all but these heads leaves the output largely unaffected~\citep{duoattention,razorattention}. By contrast, on other workloads non-retrieval heads also play leading roles in the output (Figure~\ref{fig:headheatmap}(b)), and the same preservation strategy can miss the heads that matter. Together, these observations suggest that fixing which heads to preserve offline often shows suboptimal results for such workloads.

\paragraph{Prefill--Decode Head Shift.}
We go one step further: within a single input, the heads playing the leading roles shift between the prefill and decode stages. To the best of our knowledge, we are the first to localize the head activity fluctuation to this stage boundary. To quantify this, we score each head's activity by how strongly it attends to its single most-attended context position (excluding the attention sink~\citep{streamingllm,attentionsink} at the front and the always-kept recent window at the tail), and rank the heads by this score accumulated over the entire decode stage to obtain the reference ranking. We then compare how well different early proxy signals predict this reference ranking. As shown in Figure~\ref{fig:headworkload}(b), the signal at the last prefill token (prefill@last) attains an average rank correlation of only 0.58 with the reference ranking, whereas the signal at the first decode token (decode@1) — just one step later — reaches 0.77.

\input{figures/fig_token_regions/fig_token_regions.tex}

\begin{table}[t]
  \centering
  \small
  \begin{tabular}{@{}lll@{}}
    \toprule
    Domain & Form & Examples \\
    \midrule
    Code & Syntax elements & \texttt{import}, \texttt{def} \\
    Documents & Markup syntax & \texttt{\textasciigrave\textasciigrave\textasciigrave\textless lang\textgreater}, \texttt{\textasciigrave\textasciigrave\textasciigrave} \\
    Ordinary text & Articles and punctuation & \textit{a}, \textit{an}, \textit{the}, \texttt{:}, \texttt{;} \\
    \bottomrule
  \end{tabular}
  \caption{Representative format token types and examples.}
  \label{tab:formattokens}
\end{table}

\paragraph{Format and Content Tokens.}
On code generation benchmarks such as HumanEval, however, decode@1 predicts the reference ranking no better than prefill@last — and sometimes even worse (Figure~\ref{fig:headworkload}(b)). Investigating the cause, we find that the attention signals at format-defining tokens are largely unrelated to the heads that matter for the overall generation. We therefore distinguish two kinds of decode tokens, \emph{format tokens} that carry only formatting information and \emph{content tokens} that carry actual content (Figure~\ref{fig:tokenregions}, Table~\ref{tab:formattokens}). The distinction is made by syntax rather than by a fixed word list. In code, a tree-sitter parser~\citep{treesitter} analyzes the partially generated text and marks its keywords and delimiters as format, leaving identifiers and literals as content. Opening code fences, articles, and punctuation are classified as format as well. On LongBench and RULER, where format tokens amount to little more than articles, decode@1 and content@1 — the signal at the first content token — predict the reference ranking almost equally well. On HumanEval, by contrast, an average of approximately 5.06 format tokens precede the first content token. The correlation of raw decode@1 drops sharply to 0.41, then recovers to 0.77 at content@1. This suggests that the head selection signal should be extracted from the first content token, once format tokens are filtered out. We build \methodname{} directly on these observations in the next section.

%% file: figures/fig_token_regions/fig_token_regions.tex
\begin{figure}[t]
  \centering
  \definecolor{hlprefill}{cmyk}{0.08,0.03,0,0}%
  \definecolor{hlformat}{cmyk}{0,0.10,0.40,0}%
  \definecolor{hlcontent}{cmyk}{0.22,0,0.30,0.05}%
  \definecolor{hlcontenttok}{cmyk}{0.45,0,0.50,0.15}%
  \definecolor{noteink}{cmyk}{0,0,0,0.70}%
  \setlength{\fboxsep}{2pt}%
  \fbox{\begin{minipage}[t]{\dimexpr\columnwidth-2\fboxsep-2\fboxrule\relax}
    \raggedright\setlength{\parindent}{0pt}%
    {\small\textbf{Prefill} (prompt)}\par\smallskip
    \fontsize{9}{10.4}\selectfont
    \begin{tabular}{@{}l@{}}
\colorbox{hlprefill}{\strut\ttfamily def\phantom{0}max\_element(l:\phantom{0}list):}\\
\colorbox{hlprefill}{\strut\ttfamily \phantom{0000}\symbol{34}\symbol{34}\symbol{34}Return\phantom{0}maximum\phantom{0}element\phantom{0}in\phantom{0}the\phantom{0}list.}\\
\colorbox{hlprefill}{\strut\ttfamily \phantom{0000}>>>\phantom{0}max\_element([1,\phantom{0}2,\phantom{0}3])}\\
\colorbox{hlprefill}{\strut\ttfamily \phantom{0000}3}\\
\colorbox{hlprefill}{\strut\ttfamily \phantom{0000}\symbol{34}\symbol{34}\symbol{34}}\\
    \end{tabular}
  \end{minipage}}\par\smallskip
  \fbox{\begin{minipage}[t]{\dimexpr\columnwidth-2\fboxsep-2\fboxrule\relax}
    \raggedright\setlength{\parindent}{0pt}%
    {\small\textbf{Decode} (response)}\par\smallskip
    \fontsize{9}{10.4}\selectfont
    \begin{tabular}{@{}l@{}}
\ttfamily {\setlength{\fboxsep}{0pt}\colorbox{hlformat}{\strut \textasciigrave{}\textasciigrave{}\textasciigrave{}}}{\setlength{\fboxsep}{0pt}\colorbox{hlformat}{\strut python}}\\
\ttfamily {\setlength{\fboxsep}{0pt}\colorbox{hlformat}{\strut def}}{\setlength{\fboxsep}{0pt}\colorbox{hlcontenttok}{\strut \bfseries \phantom{0}max}}{\setlength{\fboxsep}{0pt}\colorbox{hlcontent}{\strut \bfseries \_element}}{\setlength{\fboxsep}{0pt}\colorbox{hlcontent}{\strut \bfseries (l}}{\setlength{\fboxsep}{0pt}\colorbox{hlformat}{\strut :}}{\setlength{\fboxsep}{0pt}\colorbox{hlcontent}{\strut \bfseries \phantom{0}list}}{\setlength{\fboxsep}{0pt}\colorbox{hlformat}{\strut ):}}\hspace{3pt}{\normalfont\itshape\color{noteink}$\leftarrow$\,content@1 = \texttt{max}}\\
\ttfamily \phantom{000}{\setlength{\fboxsep}{0pt}\colorbox{hlcontent}{\strut \bfseries \phantom{0}m}}{\setlength{\fboxsep}{0pt}\colorbox{hlformat}{\strut \phantom{0}=}}{\setlength{\fboxsep}{0pt}\colorbox{hlcontent}{\strut \bfseries \phantom{0}l}}{\setlength{\fboxsep}{0pt}\colorbox{hlformat}{\strut [}}{\setlength{\fboxsep}{0pt}\colorbox{hlcontent}{\strut \bfseries 0}}{\setlength{\fboxsep}{0pt}\colorbox{hlformat}{\strut ]}}\\
\ttfamily \phantom{000}{\setlength{\fboxsep}{0pt}\colorbox{hlformat}{\strut \phantom{0}for}}{\setlength{\fboxsep}{0pt}\colorbox{hlcontent}{\strut \bfseries \phantom{0}e}}{\setlength{\fboxsep}{0pt}\colorbox{hlformat}{\strut \phantom{0}in}}{\setlength{\fboxsep}{0pt}\colorbox{hlcontent}{\strut \bfseries \phantom{0}l}}{\setlength{\fboxsep}{0pt}\colorbox{hlformat}{\strut :}}\\
\ttfamily \phantom{0000000}{\setlength{\fboxsep}{0pt}\colorbox{hlformat}{\strut \phantom{0}if}}{\setlength{\fboxsep}{0pt}\colorbox{hlcontent}{\strut \bfseries \phantom{0}e}}{\setlength{\fboxsep}{0pt}\colorbox{hlformat}{\strut \phantom{0}>}}{\setlength{\fboxsep}{0pt}\colorbox{hlcontent}{\strut \bfseries \phantom{0}m}}{\setlength{\fboxsep}{0pt}\colorbox{hlformat}{\strut :}}\\
\ttfamily \phantom{00000000000}{\setlength{\fboxsep}{0pt}\colorbox{hlcontent}{\strut \bfseries \phantom{0}m}}{\setlength{\fboxsep}{0pt}\colorbox{hlformat}{\strut \phantom{0}=}}{\setlength{\fboxsep}{0pt}\colorbox{hlcontent}{\strut \bfseries \phantom{0}e}}\\
\ttfamily \phantom{000}{\setlength{\fboxsep}{0pt}\colorbox{hlformat}{\strut \phantom{0}return}}{\setlength{\fboxsep}{0pt}\colorbox{hlcontent}{\strut \bfseries \phantom{0}m}}\\
\ttfamily {\setlength{\fboxsep}{0pt}\colorbox{hlformat}{\strut \textasciigrave{}\textasciigrave{}\textasciigrave{}}}\\
    \end{tabular}
  \end{minipage}}\par\smallskip
  \makebox[\columnwidth][c]{\small
    \setlength{\fboxsep}{1pt}%
    tokens:\hspace{5pt}
    \fcolorbox{noteink}{hlprefill}{\phantom{\rule{4pt}{4pt}}}~prefill\hspace{7pt}
    \fcolorbox{noteink}{hlformat}{\phantom{\rule{4pt}{4pt}}}~format\hspace{7pt}
    \fcolorbox{noteink}{hlcontent}{\phantom{\rule{4pt}{4pt}}}~\textbf{content} (bold)}
  \caption{Example prefill, format, and content token regions (HumanEval/35). content@1 marks the first content token.}
  \label{fig:tokenregions}
\end{figure}

%% file: sections/03-method.tex
\section{Method}
\label{sec:method}

\subsection{Preliminaries}
\label{sec:method-prelim}

For notational simplicity, we fix a single layer and omit the layer index, layer normalization, positional encoding (RoPE), and bias terms. Let $x_t \in \mathbb{R}^{d}$ denote the hidden state input to the attention block at position $t$, where $d$ is the model dimension. The block has $H$ query heads. Under grouped-query attention (GQA)~\citep{gqa}, adopted by most open-weight models today, such as Llama~3, Mistral, and Qwen, the query heads are partitioned into $H_{\mathrm{kv}}$ groups, each sharing one set of key/value projections. Let $g(h)$ be the group containing head $h$. The query, key, and value vectors are then computed with the projections $W_Q^{(h)}, W_K^{(g)}, W_V^{(g)} \in \mathbb{R}^{d_h \times d}$, where $d_h$ is the head dimension and $g \in \{1,\dots,H_{\mathrm{kv}}\}$ indexes the groups:
\begin{equation}
\label{eq:qkv}
q_t^{(h)} = W_Q^{(h)} x_t, \qquad k_t^{(g)} = W_K^{(g)} x_t, \qquad v_t^{(g)} = W_V^{(g)} x_t.
\end{equation}
The KV cache thus holds a single $(k_j, v_j)$ pair per group, shared by all of its query heads. An entry's memory is freed only when every one of these heads discards it, so the KV to retain must be selected at the group level.

At decode step $t$, head $h$ computes scaled dot products between its query and the keys at positions $j \le t$, as permitted by the causal mask, and normalizes them with a softmax to obtain the attention distribution:
\begin{equation}
\label{eq:attnweight}
\alpha_{t,j}^{(h)} = \frac{\exp\big(q_t^{(h)\top} k_j^{(g(h))} / \sqrt{d_h}\big)}{\sum_{j' \le t} \exp\big(q_t^{(h)\top} k_{j'}^{(g(h))} / \sqrt{d_h}\big)}.
\end{equation}
The attention output of head $h$ is the sum of the value vectors weighted by this distribution:
\begin{equation}
\label{eq:attnout}
o_t^{(h)} = \sum_{j \le t} \alpha_{t,j}^{(h)} \, v_j^{(g(h))}.
\end{equation}
The pairs $(k_j, v_j)$ for $j < t$ are read from the KV cache rather than recomputed at every step. Both the memory the cache occupies and the bandwidth spent reading it at each step therefore grow in proportion to the number of tokens it retains. This is the bottleneck that KV cache compression targets.

The per-head outputs are concatenated, passed through the output projection $W_O \in \mathbb{R}^{d \times H d_h}$, and added to the residual stream~\citep{mathframework}. Splitting $W_O$ into per-head blocks $W_O^{(h)} \in \mathbb{R}^{d \times d_h}$ decomposes the attention block's output exactly into a sum of per-head contributions:
\begin{equation}
\label{eq:headdecomp}
\mathrm{Attn}(x_t) = W_O \big[ o_t^{(1)}; \cdots; o_t^{(H)} \big] = \sum_{h=1}^{H} W_O^{(h)} o_t^{(h)}.
\end{equation}
In this decomposition, the direct contribution of head $h$ to the residual stream is $W_O^{(h)} o_t^{(h)}$.

\subsection{One-Token Peek: Head-Group Scoring}
\label{sec:method-peek}

Following the observations in Section~\ref{sec:background}, \methodname{} reads the head selection signal from a content token on the decode path. Immediately after prefill, the model briefly continues generation on its own, with the KV cache left uncompressed at this stage. \methodname{} skips the format tokens and reads the signal at the first content token it encounters. Format tokens are identified by predefined rules.\footnote{The full list is given in the supplementary material} We call this procedure the \emph{one-token peek}. The peek ends once it observes the first content token.

At the observation step $t^\ast$ (the first content token), the score of head $h$ is defined by how strongly it attends to a single context position:
\begin{equation}
\label{eq:headscore}
s^{(h)} = \max_{j \in \mathcal{C}} \alpha_{t^\ast,j}^{(h)},
\end{equation}
where $\mathcal{C}$ is the set of prefill context positions excluding the attention sink span~\citep{streamingllm,attentionsink} at the front and the always-kept recent window at the tail, with the lengths of both spans left as hyperparameters. This statistic is the same one used to score head activity in Section~\ref{sec:background}. Based on the design, $s^{(h)}$ is high for heads that focus on a specific context position, and low for heads whose attention is diffuse or concentrated on the sink. 
Assigning a low score to diffuse heads incurs little accuracy loss.
Diffuse attention spreads small weights over many positions, and even low-scoring groups retain the recent window and a few top-scoring tokens, so the output is nearly unchanged.

Since the KV cache is stored in the group level (Section~\ref{sec:method-prelim}), selection is also performed at the group level. The score $S^{(g)}$ of group $g$ is the sum of the scores $s^{(h)}$ of its query heads. To compare groups across layers, we now restore the layer index omitted in Section~\ref{sec:method-prelim}. This gives $L\,H_{\mathrm{kv}}$ groups in total, where $L$ is the number of layers. Their scores are normalized into a single distribution, which determines the score-proportional budget allocation in Section~\ref{sec:method-configs}. The tokens generated during the peek, along with their KV, are retained rather than discarded. After the prefill prefix is compressed, the main generation resumes from where the peek stopped.

\subsection{Output-Perturbation Token Scoring}
\label{sec:method-tokscore}

While the group scores govern the budget allocation across groups, deciding which tokens to retain within each group requires a token-level signal. A widely used approach selects tokens by their attention scores in a prefill observation window~\citep{snapkv}, but prefill queries differ from decode queries, so the selected tokens can be misaligned with those the subsequent generation actually needs. This is the same prefill--decode misalignment seen for head selection (Section~\ref{sec:background}). \methodname{} therefore obtains token scores from the same peek observation used for the head scores.

We define the importance of a token by how much the attention output changes when that token is removed, a quantity we call the output perturbation ($\Delta o$). For notational simplicity, we omit the observation step subscript $t^\ast$ in this section. Removing a single token $j$ from the attention of head $h$ renormalizes the remaining weights, so the resulting change in the attention output at that step can be computed exactly in closed form:
\begin{equation}
\label{eq:deltao}
\Delta o_j^{(h)} = \frac{\alpha_j^{(h)}}{1 - \alpha_j^{(h)}} \big( o^{(h)} - v_j^{(g(h))} \big),
\end{equation}
where $\alpha_j^{(h)}$ is the attention weight from Eq.~\eqref{eq:attnweight} and $o^{(h)}$ is the attention output from Eq.~\eqref{eq:attnout}. The effect of this output perturbation on the residual stream is measured by passing it through the per-head projection $W_O^{(h)}$ from the decomposition~\eqref{eq:headdecomp}:
\begin{equation}
\label{eq:tokscore}
s_j^{(h)} = \lVert W_O^{(h)} \Delta o_j^{(h)} \rVert_2 \!=\! \frac{\alpha_j^{(h)}}{1 - \alpha_j^{(h)}} \lVert W_O^{(h)} ( o^{(h)} - v_j^{(g(h))} ) \rVert_2.
\end{equation}

The squared norm is expanded with the Gram matrix $G^{(h)} = W_O^{(h)\top} W_O^{(h)}$, which avoids an explicit $W_O^{(h)}$ projection per token and lowers the per-token cost by a factor of $d/d_h$.\footnote{The supplementary material derives Eqs.~\eqref{eq:deltao} and~\eqref{eq:tokscore} and analyzes the cost of this expansion.} This score builds on the same algebra as the token-perturbation identity of CAOTE~\citep{caote}, stated in value space. CriticalKV~\citep{criticalkv} formulates the selection problem not as independent per-token scoring but as minimizing the worst-case output perturbation of the entire selected set. Our approach differs in that we measure the perturbation once, at the first content token on the decode path.

This scoring formula is common to all groups and is applied to each group independently. As with the head scores, selection is performed at the group level. The token score $s_j^{(g)}$ of group $g$ aggregates the scores $s_j^{(h)}$ of its query heads by summation. Each group retains the KV of its top-scoring tokens, with the number set by its budget. The candidates are all prefill tokens except the always-kept recent window at the tail. For the head scores (Eq.~\eqref{eq:headscore}), the attention sink was excluded from the candidate set $\mathcal{C}$ to avoid distorting the scores. Here, however, the goal is to choose what to preserve, so the sink is included among the candidates. Since the sink dominates the attention mass $\alpha_j^{(h)}$, it tends to rank high and be retained on its own, so the implementation has no separate sink protection rule. In our setup, where the peek observes only the first content token, we use the scores from step $t^\ast$ directly. Compression is applied once to the prefill prefix, and the KV of tokens newly generated during the main generation is kept intact.

\begin{figure*}[t]
  \centering
  \includegraphics[width=\textwidth]{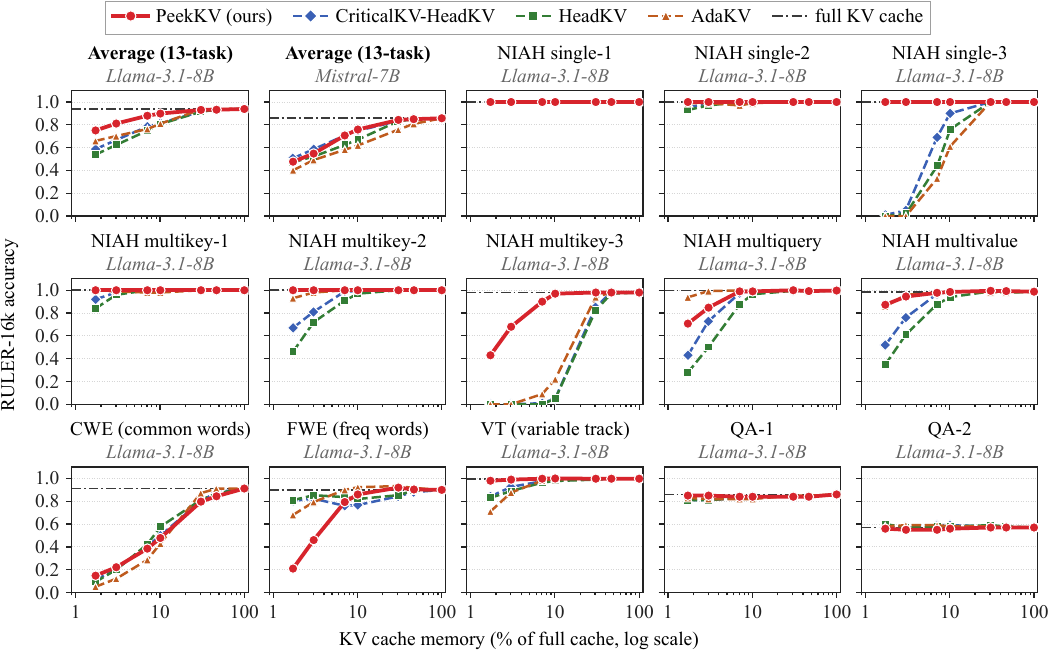}
  \caption{Per-task budget--accuracy curves on RULER-16k, with the budget as a fraction of the full cache.}
  \label{fig:rulerbudgetgrid}
\end{figure*}
\subsection{Deployment: Score-Proportional Budget Allocation}
\label{sec:method-configs}

KV cache budgets are allocated to groups in proportion to their group scores from the peek, and the $\Delta o$ token scores determine which tokens each group retains. Let $\hat{S}^{(g)} = S^{(g)} / \sum_{g'=1}^{L H_{\mathrm{kv}}} S^{(g')}$ denote the group score $S^{(g)}$ (Section~\ref{sec:method-peek}) normalized over all groups. In proportion to this score, each group $g$ receives the budget
\begin{equation}
\label{eq:scorecap}
c_g = \min\big(r + \hat{S}^{(g)}\, P,\; c\big).
\end{equation}
In this equation, $r$ is the floor that every group receives at minimum, and $c$ is the ceiling that any single group can receive. The slope is set to $P = L\,H_{\mathrm{kv}}\,(b - r)$ so that the average per-group budget before applying the ceiling equals the target $b$, which is the number of retained tokens per group (using $\sum_g \hat{S}^{(g)}=1$). Since $\hat{S}^{(g)} \ge 0$, no separate lower-bound clipping is needed. The prefill tokens of each group fall into three classes. \emph{Recent-window tokens} occupy a fixed span at the tail of the prefill and are always kept outside the budget $c_g$. \emph{Pinned tokens} fill each group's budget $c_g$ in decreasing order of their $\Delta o$ scores (Eq.~\eqref{eq:tokscore}). \emph{Evicted tokens} are those exceeding the budget and are removed.

The key property of score-proportional allocation is that the budget concentrates on a few high-scoring groups. The groups that matter for a given input obtain large $\hat{S}^{(g)}$ and thus budgets near the ceiling $c$. With $c$ set on the order of the context length, these groups can retain most of the context in 16-bit precision. As a result, the answer tokens in long-context retrieval, the needles that single-step $\Delta o$ selection can easily miss, are largely shielded from eviction within these groups. Most low-scoring groups, by contrast, fall toward the floor $r$, keeping the average per-group budget low.

In our setup, the ceiling $c$ is set to the reference prefill length of the workload. The target average budget $b$ is determined by the target compression ratio of each evaluation operating point. Unlike the streaming head partition of DuoAttention, which is fixed once by offline training, the per-group budgets are dynamically determined by the peek for each input. With this single allocation mechanism, \methodname{} covers the full range of workloads, from short- and medium-context QA and code generation to long-context retrieval.

%% file: sections/04-experiments.tex
\section{Experiments}
\label{sec:exp}

\subsection{Experimental Setup}
\begin{table*}[t]
  \centering
  \small
  \setlength{\tabcolsep}{4pt}
  \begin{tabular*}{\textwidth}{@{\extracolsep{\fill}}l c *{7}{c} *{3}{c} c@{}}
    \toprule
     & RULER & \multicolumn{7}{c}{LongBench} & \multicolumn{3}{c}{BFCL v3} & HumanEval \\
    \cmidrule(lr){2-2}\cmidrule(lr){3-9}\cmidrule(lr){10-12}\cmidrule(l){13-13}
    Category & All & SQA & MQA & Sum. & FS & Syn. & Code & All & simple & live\_simple & live\_multiple & -- \\
    \midrule
    \multicolumn{13}{@{}l}{\textit{(a) Llama-3.1-8B-Instruct}}\\
    \addlinespace[1pt]
    Full KV cache & 93.9 & 40.7 & 46.5 & 28.7 & 69.0 & 55.3 & 59.4 & 50.8 & 94.0 & 72.0 & 67.0 & 69.5 \\
    \cmidrule(lr){1-13}
    KV cache budget & \multicolumn{1}{c}{10\%} & \multicolumn{7}{c}{1.5\%} & \multicolumn{3}{c}{20\%} & 40\% \\
    {\small\itshape\quad$\approx$\,tokens} & \multicolumn{1}{c}{\small 1583} & \multicolumn{7}{c}{\small 139} & \multicolumn{3}{c}{\small 143} & {\small 72} \\
    \cmidrule(lr){1-13}
    H2O & 19.2 & 28.3 & 40.0 & 22.8 & 49.4 & 53.3 & 53.4 & 42.0 & 53.0 & 36.0 & 5.0 & 56.7 \\
    StreamingLLM & 19.0 & 23.1 & 35.9 & 18.3 & 39.3 & \textbf{55.4} & 57.9 & 39.7 & 21.0 & 4.0 & 0.0 & 15.9 \\
    SnapKV & 70.4 & 29.6 & 39.9 & 20.0 & 48.9 & 53.3 & 55.5 & 42.2 & 62.0 & 30.0 & 4.0 & 37.8 \\
    PyramidKV & 69.9 & 29.6 & 39.9 & 20.0 & 48.9 & 53.3 & 55.5 & 42.2 & 62.0 & 30.0 & 4.0 & 37.8 \\
    AdaKV & 81.0 & 32.1 & 44.5 & 20.8 & 55.4 & 54.3 & \textbf{60.8} & 46.0 & 86.0 & 53.0 & 22.0 & 53.0 \\
    HeadKV $^\dagger$ & 80.6 & 36.5 & 44.6 & 22.6 & 59.3 & 53.3 & 59.9 & 47.3 & 91.0 & 64.0 & 43.0 & 65.2 \\
    CriticalKV-HeadKV $^\dagger$ & 81.6 & 37.8 & \textbf{45.7} & 23.1 & 58.6 & 53.0 & 60.3 & 47.6 & \textbf{92.0} & \textbf{70.0} & \textbf{48.0} & 67.1 \\
    \textbf{\methodname{}} & \textbf{89.9} & \textbf{37.9} & 44.5 & \textbf{23.5} & \textbf{66.2} & 55.1 & 59.2 & \textbf{48.7} & \textbf{92.0} & 65.0 & 40.0 & \textbf{72.0} \\
    \midrule
    \multicolumn{13}{@{}l}{\textit{(b) Mistral-7B-Instruct-v0.2}}\\
    \addlinespace[1pt]
    Full KV cache & 85.8 & 30.8 & 26.1 & 27.0 & 67.0 & 45.0 & 54.2 & 43.0 & 82.0 & 51.0 & 56.0 & 42.1 \\
    \cmidrule(lr){1-13}
    KV cache budget & \multicolumn{1}{c}{10\%} & \multicolumn{7}{c}{1.5\%} & \multicolumn{3}{c}{20\%} & 40\% \\
    {\small\itshape\quad$\approx$\,tokens} & \multicolumn{1}{c}{\small 1604} & \multicolumn{7}{c}{\small 142} & \multicolumn{3}{c}{\small 152} & {\small 74} \\
    \cmidrule(lr){1-13}
    H2O & 18.8 & 20.3 & 17.1 & 22.1 & 53.6 & 22.4 & 47.2 & 32.8 & 42.0 & 22.0 & 0.0 & 20.7 \\
    StreamingLLM & 22.5 & 15.3 & 14.6 & 17.0 & 47.3 & 14.5 & 49.6 & 29.6 & 0.0 & 0.0 & 0.0 & 5.5 \\
    SnapKV & 55.4 & 22.4 & 18.9 & 19.5 & 51.3 & 31.7 & 49.1 & 34.1 & 5.0 & 0.0 & 1.0 & 11.0 \\
    PyramidKV & 53.9 & 22.4 & 18.9 & 19.5 & 51.3 & 31.7 & 49.1 & 34.1 & 5.0 & 0.0 & 1.0 & 12.2 \\
    AdaKV & 61.9 & 23.8 & 20.4 & 20.9 & 56.0 & 34.0 & 52.6 & 36.7 & 7.0 & 2.0 & 1.0 & 21.3 \\
    HeadKV $^\dagger$ & 67.1 & 26.9 & \textbf{21.8} & 22.2 & 64.3 & 33.8 & 52.3 & 38.8 & 69.0 & 38.0 & 27.0 & 35.4 \\
    CriticalKV-HeadKV $^\dagger$ & 74.9 & \textbf{27.5} & 20.8 & 22.6 & \textbf{65.4} & 33.4 & \textbf{52.9} & 39.1 & 68.0 & 41.0 & 31.0 & \textbf{36.6} \\
    \textbf{\methodname{}} & \textbf{75.9} & 25.8 & 21.3 & \textbf{23.0} & 64.2 & \textbf{39.4} & 52.1 & \textbf{39.2} & \textbf{78.0} & \textbf{47.0} & \textbf{37.0} & 31.1 \\
    \bottomrule
  \end{tabular*}
  \caption{Accuracy (\%) on public benchmarks. $^\dagger$Requires per-model offline profiling.}
  \label{tab:benchmatrix}
\end{table*}

\paragraph{Evaluation Protocol.} In deployment, \methodname{} keeps the tokens generated during the peek and continues from them, so the only cost is that compression is deferred by a few tokens. In evaluation, however, keeping this behavior would let the first few generated tokens be produced at full-cache quality, whereas the baselines generate every token from the compressed cache. This would bias the comparison in our favor. Therefore, in every accuracy evaluation in this paper, we take only the head scores and token scores from the peek, discard all tokens the peek generated, and score only the output regenerated from the end of the prefill with the compressed KV cache. All methods thus share the same decoding starting point.

\paragraph{Evaluation Benchmarks.}
When the target output is only one or two tokens, the one-token peek can effectively see the answer, inflating accuracy. We therefore exclude multiple-choice benchmarks and select benchmarks that require continuous generation on the compressed cache. We use four benchmarks: RULER-16k~\citep{ruler}, which measures long-context recall and QA; LongBench (English)~\citep{longbench}, which covers long-context QA, summarization, few-shot tasks, and code completion; BFCL v3~\citep{bfcl}, which evaluates tool calling; and HumanEval~\citep{humaneval}, a code generation benchmark.

For RULER-16k, we sample 100 examples per task across 13 tasks, 1,300 in total. For LongBench, we use all 3,750 examples of the 16 English tasks, following the official protocol with inputs truncated to 15,000 tokens. For BFCL, we sample 100 examples from each of the single-turn categories of v3 (\texttt{simple}, \texttt{live\_simple}, \texttt{live\_multiple}), 300 in total, and grade them by AST matching. For HumanEval, we use all 164 examples in a 0-shot setting and score by pass@1. All sampling uses seed 42, and all generation uses greedy decoding.

\paragraph{Baselines and Models.}
As baselines, we use the representative KV cache compression methods StreamingLLM~\citep{streamingllm}, H2O, SnapKV, and PyramidKV, the recent methods AdaKV, HeadKV (its reasoning head-score variant), and CriticalKV-HeadKV, which combines HeadKV with the output-perturbation rescoring of CriticalKV~\citep{criticalkv}.

We evaluate on Llama-3.1-8B-Instruct~\citep{llama3} and Mistral-7B-Instruct-v0.2~\citep{mistral}. Both are instruct models that support long contexts (128K and 32K, respectively).

\subsection{Accuracy across Memory Budgets}

We evaluate the accuracy of each method across memory budgets on RULER-16k (Figure~\ref{fig:rulerbudgetgrid}). The curves compare \methodname{} against AdaKV, HeadKV, and CriticalKV-HeadKV, the three baselines with the highest RULER accuracy in Table~\ref{tab:benchmatrix}. The remaining methods are compared only at the 10\% budget in that table. The x-axis is the measured physical KV cache size of each method, as a fraction of the full cache. On Llama-3.1-8B, \methodname{} achieves the highest accuracy across the entire 1.7--10\% range. At the 3\% budget, its 13-task average of 0.811 leads the runner-up AdaKV (0.701) by 11.0pp. On NIAH single-3 and multikey-3, where the baselines collapse to 0.00--0.05, \methodname{} maintains 1.000 and 0.680, respectively. On Mistral-7B, the advantage depends on the budget. CriticalKV-HeadKV leads by 3.0pp and 3.8pp at 1.7\% and 3\%, the two methods tie at 7\%, and \methodname{} leads at 10\% and 30\%.

The FWE (Frequent Words Extraction) task at low budgets also exposes a weakness of \methodname{}. We analyze the cause in detail in Section~\ref{sec:exp-ablation}.

\subsection{Accuracy across Benchmarks}

We evaluate on the four benchmarks RULER-16k, LongBench, BFCL v3, and HumanEval, which together cover long-context recall, QA, summarization, tool calling, and code generation (Table~\ref{tab:benchmatrix}). The budget for each benchmark is the measured physical KV cache size as a fraction of the full cache. It counts everything retained from the prefill cache and excludes the KV of generated tokens. The budget level is chosen separately for each benchmark to avoid ceiling and floor effects. \methodname{} stays closest to the full-cache accuracy on RULER and LongBench, while on BFCL and HumanEval the advantage depends on the model. It leads StreamingLLM, H2O, SnapKV, and PyramidKV, which evict tokens without any head-level allocation, on all four benchmarks. Against AdaKV, HeadKV, and CriticalKV-HeadKV, which allocate the budget unevenly across heads, the margins narrow, yet \methodname{} still achieves the highest accuracy in more than half of the per-model columns of Table~\ref{tab:benchmatrix}, without the per-model offline profiling that the HeadKV and CriticalKV-HeadKV require. HeadKV and CriticalKV-HeadKV rely on per-model offline profiling. Among methods that require no offline profiling, \methodname{} achieves the highest accuracy in most columns. \footnote{Results at additional budget levels are provided in the supplementary material.}

\subsection{Ablation Studies}
\label{sec:exp-ablation}
We showed earlier in Figure~\ref{fig:headworkload} that the content@1 signal generally predicts decode-time head importance better than offline profiles or prefill signals. We now analyze where the accuracy gain of \methodname{} comes from (Table~\ref{tab:signalaxes}). Figure~\ref{fig:peekwindow} then shows how accuracy varies with the kind of token the peek signal is read from and with the number of peek tokens.

We first decompose where the accuracy gain of \methodname{} comes from along its two axes, head selection and token selection. As Table~\ref{tab:signalaxes} shows, replacing the peek token scores with prefill signals costs 0.6pp on RULER and 5.5pp on HumanEval. When the head scores are replaced as well, the loss grows to 11.2pp and 9.8pp. With both axes read from the prefill stage, this configuration falls into the same family as AdaKV, which allocates per-head budgets from prefill attention — though the two differ in their scoring functions and allocation rules.

\begin{table}[t]
  \centering
  \small
  \setlength{\tabcolsep}{3pt}
  \begin{tabular}{@{}l cc cc@{}}
    \toprule
    & \multicolumn{2}{c}{RULER (10\%)} & \multicolumn{2}{c}{HumanEval (40\%)} \\
    \cmidrule(lr){2-3}\cmidrule(l){4-5}
    & Acc. & $\Delta$ & Acc. & $\Delta$ \\
    \midrule
    \methodname{} & \textbf{89.9} & & \textbf{72.0} & \\
    \quad + token signal $\to$ prefill & 89.3 & $-$0.6 & 66.5 & $-$5.5 \\
    \quad + head signal $\to$ prefill & 78.7 & $-$11.2 & 62.2 & $-$9.8 \\
    \bottomrule
  \end{tabular}
  \caption{Accuracy (\%) as the peek signals are cumulatively replaced by prefill signals.}
  \label{tab:signalaxes}
\end{table}

Figure~\ref{fig:peekwindow} compares three positions for reading the peek signal. Prefill-last pools attention over the last 32 prefill tokens. Decode peek observes the first $N$ decode tokens as they are generated. Content peek skips format tokens and observes $N$ content tokens. Here $N$ is the peek window size. With a single token ($N{=}1$), Content peek is the most accurate (0.811 on RULER at 3\%), and Decode peek (0.610) falls even below Prefill-last (0.664). The same ordering holds on LongBench (Figure~\ref{fig:peekwindow}(c)). The benefit of skipping format tokens persists as the budget grows. The gap between Content peek and Decode peek is 20.1pp at the 3\% budget and 11.5pp at 10\%. Even though Prefill-last pools 32 tokens, it trails Content peek by 14.7pp at 3\% and 11.2pp at 10\%. As $N$ grows, however, the Decode peek window passes the format span and begins to include content tokens, so its accuracy converges to that of Content peek (0.893 vs 0.900 at $N{=}8$, RULER 3\%).

Next, the effect of the number of peek tokens depends on the budget. At the low 3\% budget on RULER-16k, the one-token peek is less accurate than multi-token peeks. About 95\% of the +6.2pp recovered by a second peek token comes from the aggregation tasks FWE and CWE (Common Words Extraction) and from NIAH multikey-3. A per-item error analysis indicates that the loss stems not from misselecting heads but from limited coverage in token selection. Even the one-token peek locates the answer correctly, but it fails to retain the tails of multi-token spans, such as the trailing word tokens in FWE. The gain from widening the observation to eight tokens shrinks from 8.9pp at the 3\% budget to 2.7pp at 10\%, and on LongBench it stays within 0.7pp even at the 1.5\% budget. The deficit is therefore confined to recall and aggregation tasks at low budgets, and we adopt the one-token peek as the default.

\begin{figure}[t]
  \centering
  \includegraphics[width=\columnwidth]{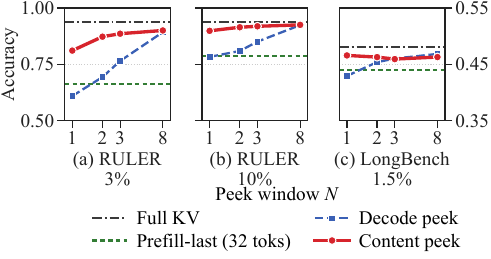}
  \caption{Accuracy by signal-reading position and the number of peek tokens.}
  \label{fig:peekwindow}
\end{figure}

%% file: sections/06-conclusion.tex
\section{Conclusion}
\label{sec:conclusion}

In this paper, we found that the active heads shift between the prefill and decode stages even within a single input, whereas the active-head composition remains stable across content tokens once format tokens are filtered out on the decode path. Building on this observation, \methodname{} jointly obtains head-group scores and token-importance scores from a single observation of the first content token — the one-token peek — and allocates KV budgets anew for each input without offline profiling or training. Across the four public benchmarks, \methodname{} achieves the highest accuracy in more than half of the per-model comparison columns, and on RULER and LongBench, including long-context retrieval, it stays closest to the full-cache accuracy at small memory budgets. As future work, we leave improving performance in multi-turn settings, a weakness reported to be shared by KV cache eviction approaches~\citep{scbench}.

%% file: supplementary/sections/A-proofs.tex
\section{Detailed Derivations for Section 3}
\label{supp:proofs}

\subsection{Setup and Notation}
\label{supp:setup}

We adopt the notation of Section~3.1 in the main paper, fix a single layer and the observation step $t^\ast$, and omit the step subscript following the convention of Section~3.3 in the main paper. With $e_i^{(h)} = \exp\big(q^{(h)\top} k_i^{(g(h))}/\sqrt{d_h}\big)$, the attention weights and the output of head $h$ are as in Eqs.~(2) and~(3) in the main paper:
\begin{equation}
\label{eq:supp-setup}
\alpha_i^{(h)} = \frac{e_i^{(h)}}{\sum_{i' \le t^\ast} e_{i'}^{(h)}}, \qquad o^{(h)} = \sum_{i \le t^\ast} \alpha_i^{(h)} v_i^{(g(h))}.
\end{equation}
Evicting token $j$ means removing the entry $(k_j, v_j)$ from the KV cache. The unnormalized weights $e_i^{(h)}$ of the remaining tokens ($i \ne j$) are unchanged, since their keys and the query are untouched. Only the term $e_j^{(h)}$ drops out of the softmax denominator. Hereafter we omit the head superscript $(h)$ and the group superscript $(g(h))$ where no confusion arises.

\subsection{Exact Output Perturbation under Single-Token Eviction (Eq.~(6) in the Main Paper)}
\label{supp:deltao}

\begin{lemma}[Softmax renormalization]
\label{lem:renorm}
If $\alpha_j < 1$, the attention weights $\tilde{\alpha}_i$ after evicting token $j$ satisfy, for all $i \le t^\ast$ with $i \ne j$,
\begin{equation}
\label{eq:supp-renorm}
\tilde{\alpha}_i = \frac{\alpha_i}{1 - \alpha_j}.
\end{equation}
\end{lemma}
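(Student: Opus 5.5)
The plan is a direct computation from the definition of the softmax in Eq.~\eqref{eq:supp-setup}, using the observation stated in the setup: eviction of token $j$ leaves every unnormalized weight $e_i$ with $i \ne j$ unchanged and only removes $e_j$ from the denominator. Write $Z = \sum_{i' \le t^\ast} e_{i'}$ for the original normalizer. After eviction, the new normalizer is $\tilde Z = \sum_{i' \le t^\ast,\, i' \ne j} e_{i'} = Z - e_j$. The new weights are therefore $\tilde\alpha_i = e_i / (Z - e_j)$ for $i \ne j$.

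The key step is to express $\tilde Z$ in terms of the original weights. Factor out $Z$ to get $\tilde Z = Z(1 - e_j/Z) = Z(1-\alpha_j)$. Substituting,
\begin{equation*}
\tilde\alpha_i = \frac{e_i}{Z(1-\alpha_j)} = \frac{e_i/Z}{1-\alpha_j} = \frac{\alpha_i}{1-\alpha_j},
\end{equation*}
which is Eq.~\eqref{eq:supp-renorm}.

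The only point needing care is well-definedness, and this is exactly what the hypothesis $\alpha_j < 1$ secures. Since every $e_i > 0$, the condition $\alpha_j < 1$ is equivalent to $Z - e_j > 0$. That in turn holds precisely when some other token $i \ne j$ remains under the causal mask. So the division is legitimate and the renormalized distribution exists. As a sanity check, I would note that $\sum_{i \ne j} \tilde\alpha_i = (1-\alpha_j)/(1-\alpha_j) = 1$. There is no real obstacle here. This lemma is the algebraic input for the closed form of $\Delta o_j$ in Eq.~\eqref{eq:deltao}, obtained next by writing $\tilde o - o$ with these weights.
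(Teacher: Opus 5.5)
Your proof is correct and is the paper's argument: set $Z=\sum_{i'\le t^\ast} e_{i'}$, observe that the numerators $e_i$ for $i\neq j$ are unchanged while the denominator becomes $Z-e_j=Z(1-\alpha_j)$, and divide. Your extra remarks on well-definedness ($\alpha_j<1$ iff $Z-e_j>0$ iff another token survives under the causal mask) and the normalization check are accurate points that the paper leaves implicit.
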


\begin{proof}
Let $Z = \sum_{i' \le t^\ast} e_{i'}$, so that $\alpha_i = e_i / Z$. After the eviction the denominator becomes
\begin{equation}
Z' = \sum_{i' \ne j} e_{i'} = Z - e_j = Z(1 - \alpha_j)
\end{equation}
and the numerators $e_i$ are unchanged, so
\begin{equation}
\tilde{\alpha}_i = \frac{e_i}{Z'} = \frac{e_i}{Z(1-\alpha_j)} = \frac{\alpha_i}{1 - \alpha_j}. \qedhere
\end{equation}
\end{proof}

\begin{proposition}[Exact single-token perturbation; Eq.~(6)]
\label{prop:deltao}
The change $\Delta o_j = \tilde{o} - o$ that evicting token $j$ induces in the attention output at the observation step is, exactly and without approximation,
\begin{equation}
\label{eq:supp-deltao}
\Delta o_j = \frac{\alpha_j}{1 - \alpha_j}\,\big(o - v_j\big).
\end{equation}
\end{proposition}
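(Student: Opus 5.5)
The plan is a direct computation that uses Lemma~\ref{lem:renorm}. We assume $\alpha_j<1$, as the lemma requires; otherwise the right-hand side of Eq.~\eqref{eq:supp-deltao} is undefined. This also means at least one other token remains after the eviction, so the renormalized softmax is well defined. Since every $e_i>0$, we in fact have $\alpha_j<1$ whenever $t^\ast\ge 2$.

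First, write the post-eviction output as $\tilde{o}=\sum_{i\ne j}\tilde{\alpha}_i v_i$. Substituting $\tilde{\alpha}_i=\alpha_i/(1-\alpha_j)$ from Lemma~\ref{lem:renorm} gives $\tilde{o}=\frac{1}{1-\alpha_j}\sum_{i\ne j}\alpha_i v_i$.

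Second, separate the $j$-th term of the original output. This gives $\sum_{i\ne j}\alpha_i v_i = o-\alpha_j v_j$, and therefore $\tilde{o}=\frac{o-\alpha_j v_j}{1-\alpha_j}$.

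Third, subtract $o$ over the common denominator:
\begin{equation*}
\Delta o_j=\tilde{o}-o=\frac{o-\alpha_j v_j-(1-\alpha_j)o}{1-\alpha_j}=\frac{\alpha_j\,(o-v_j)}{1-\alpha_j}.
\end{equation*}
This is exactly the claimed identity. No approximation enters, because the keys, values and query of the remaining tokens are untouched, as noted in the setup.

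There is no real obstacle. The only point that needs care is the bookkeeping in the second step: we must rewrite the partial sum using the full output $o$, rather than leaving it over $i\ne j$, so that the factor $(o-v_j)$ emerges cleanly.
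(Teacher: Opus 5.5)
Your proposal is correct and follows the paper's proof step for step. Like the paper, you substitute the renormalized weights from Lemma~\ref{lem:renorm}, rewrite $\sum_{i\ne j}\alpha_i v_i$ as $o-\alpha_j v_j$, and subtract $o$ over the common denominator $1-\alpha_j$. Your justification of $\alpha_j<1$ (positivity of every $e_i$ once at least two positions are present) is a slightly more general form of the paper's remark that $\alpha_{t^\ast}>0$ for every prefill candidate.
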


\begin{proof}
By Lemma~\ref{lem:renorm}, the output after the eviction is
\begin{equation}
\label{eq:supp-otilde}
\begin{split}
\tilde{o} = \sum_{i \ne j} \tilde{\alpha}_i\, v_i
&= \frac{1}{1-\alpha_j} \sum_{i \ne j} \alpha_i v_i \\
&= \frac{1}{1-\alpha_j}\Big( \sum_{i \le t^\ast} \alpha_i v_i - \alpha_j v_j \Big) = \frac{o - \alpha_j v_j}{1 - \alpha_j}.
\end{split}
\end{equation}
Therefore
\begin{equation}
\begin{split}
\Delta o_j = \tilde{o} - o
&= \frac{o - \alpha_j v_j - (1 - \alpha_j)\, o}{1 - \alpha_j} \\
&= \frac{\alpha_j\, o - \alpha_j v_j}{1 - \alpha_j} = \frac{\alpha_j}{1 - \alpha_j}\,\big(o - v_j\big). \mbox{\qedhere}
\end{split}
\end{equation}
\end{proof}

The condition $\alpha_j < 1$ holds for every eviction candidate (a prefill token) $j \le t^\ast - 1$, since causal attention always assigns a positive weight $\alpha_{t^\ast} > 0$ to the current token $t^\ast$ (for numerical stability, the implementation clamps the denominator to $\max(1-\alpha_j, 10^{-6})$).

\subsection{Residual-Stream Score (Eq.~(7) in the Main Paper)}
\label{supp:tokscore}

\begin{proposition}[Residual-stream perturbation; Eq.~(7)]
\label{prop:tokscore}
Under the per-head decomposition $\mathrm{Attn}(x) = \sum_{h} W_O^{(h)} o^{(h)}$ of Eq.~(4) in the main paper, the $\ell_2$ magnitude of the change that evicting token $j$ from head $h$ induces in the attention block's output (the quantity added to the residual stream) is
\begin{equation}
\label{eq:supp-tokscore}
\begin{split}
s_j^{(h)} &= \big\lVert W_O^{(h)} \Delta o_j^{(h)} \big\rVert_2 \\
&= \frac{\alpha_j^{(h)}}{1 - \alpha_j^{(h)}}\, \big\lVert W_O^{(h)} \big( o^{(h)} - v_j^{(g(h))} \big) \big\rVert_2.
\end{split}
\end{equation}
\end{proposition}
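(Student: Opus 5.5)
The plan is to obtain Proposition~\ref{prop:tokscore} directly from the exact perturbation of Proposition~\ref{prop:deltao}, using linearity of the per-head decomposition in Eq.~(4) of the main paper. Write the attention block output before eviction as $\mathrm{Attn} = \sum_{h'} W_O^{(h')} o^{(h')}$. Next, write the output after evicting token $j$ from head $h$ only as the same sum with $o^{(h)}$ replaced by $\tilde{o}^{(h)}$. Since evicting $j$ from head $h$ leaves the other heads' outputs $o^{(h')}$, $h' \ne h$, untouched in this per-head accounting, every term except the $h$-th cancels in the difference. The change added to the residual stream is therefore exactly $W_O^{(h)}(\tilde{o}^{(h)} - o^{(h)}) = W_O^{(h)} \Delta o_j^{(h)}$. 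Taking the $\ell_2$ norm gives the first equality in Eq.~\eqref{eq:supp-tokscore}.

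For the second equality, I substitute the closed form $\Delta o_j^{(h)} = \frac{\alpha_j^{(h)}}{1-\alpha_j^{(h)}}(o^{(h)} - v_j^{(g(h))})$ from Proposition~\ref{prop:deltao}. I then use linearity of $W_O^{(h)}$ to pull out the scalar prefactor, and absolute homogeneity of the norm, $\lVert c\,u\rVert_2 = |c|\,\lVert u\rVert_2$. The factor $|c|$ equals $c$ because $0 < \alpha_j^{(h)} < 1$. Positivity holds since softmax weights are strictly positive. The strict upper bound was already justified after Proposition~\ref{prop:deltao}: the current token $t^\ast$ receives positive weight, so every prefill candidate $j \le t^\ast - 1$ has $\alpha_j^{(h)} < 1$.

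The only point needing care is the first step: the claim that eviction affects only head $h$'s term. Under GQA the entry $(k_j, v_j)$ is shared by the whole group. The proposition, however, concerns evicting $j$ from head $h$ alone, so I will state explicitly that the other query heads' outputs are held fixed. I will also note that evicting at the group level would change the block output by $\sum_{h \in g} W_O^{(h)} \Delta o_j^{(h)}$. Its norm is bounded by the summed score $s_j^{(g)}$ by the triangle inequality, which is consistent with the aggregation used in Section~3.3. Everything else is routine algebra.
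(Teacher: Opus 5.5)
Your proposal is correct and matches the paper's proof essentially step for step. Linearity of the per-head decomposition isolates $W_O^{(h)}\Delta o_j^{(h)}$, and substituting Proposition~\ref{prop:deltao} with absolute homogeneity of the norm for $c=\alpha_j/(1-\alpha_j)>0$ gives the second equality. Your group-level triangle-inequality remark is the same one the paper makes right after the proof.
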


\begin{proof}
If token $j$ is evicted from head $h$ only, the other terms $W_O^{(h')} o^{(h')}$ ($h' \ne h$) of the decomposition are unchanged, and by linearity the change in the attention block's output is
\begin{multline}
\sum_{h'} W_O^{(h')} \tilde{o}^{(h')} - \sum_{h'} W_O^{(h')} o^{(h')} \\
= W_O^{(h)} \big( \tilde{o}^{(h)} - o^{(h)} \big) = W_O^{(h)} \Delta o_j^{(h)}.
\end{multline}
Substituting the identity of Proposition~\ref{prop:deltao}, and using $\lVert c\, W_O (o - v_j) \rVert_2 = c\, \lVert W_O (o - v_j) \rVert_2$ for the scalar $c = \alpha_j / (1 - \alpha_j) > 0$ (absolute homogeneity of the norm), we obtain the second equality of Eq.~\eqref{eq:supp-tokscore}. \qedhere
\end{proof}

Since KV entries are shared at the group level (Section~3.1 in the main paper), physically evicting token $j$ from group $g$ perturbs all of its query heads at once. Each head's output change is still given by Proposition~\ref{prop:deltao}, so by linearity the change in the attention block's output is exactly $\sum_{h:\, g(h)=g} W_O^{(h)} \Delta o_j^{(h)}$. The sum of per-head scores $s_j^{(g)} = \sum_{h:\, g(h)=g} s_j^{(h)}$, which Section~3.3 in the main paper uses as the group token score, upper-bounds the $\ell_2$ magnitude of this change by the triangle inequality. A token evicted for a low score is therefore guaranteed to induce a perturbation no larger than that score. Computing the exact magnitude with the Gram expansion of Section~\ref{supp:gram} would require the cross-Gram matrices $W_O^{(h)\top} W_O^{(h')}$ for every pair of heads in the group, multiplying the per-token cost by the group size. In multi-head attention, where the group size is one, the two quantities coincide.

\subsection{Gram-Matrix Expansion and Computational Cost}
\label{supp:gram}

\begin{lemma}[Gram expansion]
\label{lem:gram}
Let $G^{(h)} = W_O^{(h)\top} W_O^{(h)} \in \mathbb{R}^{d_h \times d_h}$ (positive semi-definite). Then $\lVert W_O^{(h)} z \rVert_2^2 = z^\top G^{(h)} z$ for any $z \in \mathbb{R}^{d_h}$, and in particular, for $z = o - v_j$,
\begin{equation}
\label{eq:supp-gram}
\big\lVert W_O^{(h)} (o - v_j) \big\rVert_2^2 = o^\top G^{(h)} o - 2 \big(G^{(h)} o\big)^\top v_j + v_j^\top G^{(h)} v_j.
\end{equation}
\end{lemma}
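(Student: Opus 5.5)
The plan is a direct computation with no earlier result needed beyond the definition of the Euclidean norm. First, for any $z \in \mathbb{R}^{d_h}$, write $\lVert W_O^{(h)} z\rVert_2^2 = (W_O^{(h)} z)^\top (W_O^{(h)} z) = z^\top W_O^{(h)\top} W_O^{(h)} z = z^\top G^{(h)} z$. The same identity shows $z^\top G^{(h)} z \ge 0$ for all $z$. Together with the evident symmetry $G^{(h)\top} = G^{(h)}$, this gives the parenthetical claim that $G^{(h)}$ is positive semi-definite.

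Second, substitute $z = o - v_j$ (with $o = o^{(h)}$ and $v_j = v_j^{(g(h))}$ as in Section~\ref{supp:setup}). Expand the quadratic form bilinearly:
\begin{equation*}
(o - v_j)^\top G^{(h)} (o - v_j) = o^\top G^{(h)} o - o^\top G^{(h)} v_j - v_j^\top G^{(h)} o + v_j^\top G^{(h)} v_j .
\end{equation*}
Then use symmetry of $G^{(h)}$ to merge the cross terms. Since $v_j^\top G^{(h)} o$ is a scalar, it equals its transpose $o^\top G^{(h)\top} v_j = o^\top G^{(h)} v_j = (G^{(h)} o)^\top v_j$. This yields the $-2(G^{(h)} o)^\top v_j$ term of Eq.~\eqref{eq:supp-gram}.

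There is no real obstacle. The only step that needs care is invoking symmetry of $G^{(h)}$ explicitly when combining the two cross terms, since the bilinear expansion alone does not justify the factor of $2$. It is also worth noting why this form is useful for the cost analysis that follows. The quantities $o^\top G^{(h)} o$ and $G^{(h)} o$ are shared across all $j$. Only the last two terms depend on the token, and each is computed in the $d_h$-dimensional space rather than $d$.
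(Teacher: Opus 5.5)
Your proposal is correct and follows the paper's proof essentially step for step: you write the squared norm as $z^\top W_O^{(h)\top} W_O^{(h)} z = z^\top G^{(h)} z$, substitute $z = o - v_j$, and use $G^{(h)\top} = G^{(h)}$ to merge the two cross terms. Your explicit verification that $G^{(h)}$ is positive semi-definite is a small addition, since the paper only asserts this parenthetically.
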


\begin{proof}
We have $\lVert W_O z \rVert_2^2 = (W_O z)^\top (W_O z) = z^\top W_O^\top W_O\, z = z^\top G z$. Substituting $z = o - v_j$ and expanding the quadratic form yields Eq.~\eqref{eq:supp-gram}, where the symmetry of $G$ ($G^\top = G$) combines the two cross terms, $o^\top G v_j = v_j^\top G o$. \qedhere
\end{proof}

The three terms of Eq.~\eqref{eq:supp-gram} have different reuse structures. (i) $G^{(h)}$ is a function of the model weights alone, so it is precomputed once per head ($O(d\, d_h^2)$, offline). (ii) $v_j^\top G^{(h)} v_j$ does not depend on the query, so it is computed once per token and reused when there are multiple observation steps ($O(d_h^2)$ per token). (iii) $o^\top G^{(h)} o$ and the vector $u = G^{(h)} o$ are computed once per step ($O(d_h^2)$ per step), and the cross term is the inner product $u^\top v_j$ at $O(d_h)$ per token. The cost per candidate token thus drops from the $O(d\, d_h)$ required by an explicit projection with $W_O^{(h)} \in \mathbb{R}^{d \times d_h}$ to at most $O(d_h^2)$, a reduction by a factor of $d / d_h$ ($d \gg d_h$). The norm $\lVert W_O^{(h)} (o - v_j) \rVert_2$ is obtained by computing its square with Eq.~\eqref{eq:supp-gram} and taking a single square root at the end. The score $s_j^{(h)}$ then follows by multiplying this norm by $\alpha_j/(1-\alpha_j)$.

The measured gains of this expansion appear in both computation time and working memory. Token scoring speeds up by a factor of 6.66 ($N=1024$) to 12.44 ($N=16384$) depending on the number of candidate tokens $N$, and working memory shrinks by a factor of 17.5 ($N=16384$). The speed gain falls short of the theoretical factor $d/d_h = 32$ ($d=4096$, $d_h=128$) because the matrix multiplications on the Gram path are small and run at lower hardware efficiency. The measurement conditions and the effect on the full pipeline are covered in Section~\ref{supp:results:overhead}, and Table~\ref{tab:supp-overhead} summarizes the resulting decode throughput and KV memory.

\begin{table}[t]
  \centering
  \small
  \setlength{\tabcolsep}{4pt}
  \begin{tabular}{rccrcr}
    \toprule
    & \multicolumn{2}{c}{decode (ms/token)} & & & \\
    \cmidrule(lr){2-3}
    Context & dense & \methodname{} & gain & attn & KV (MiB) \\
    \midrule
     2{,}048 & 21.48 & 21.18 & 1.4\% & 1.38$\times$ & 256 $\to$ 25.6 \\
     4{,}096 & 21.81 & 21.23 & 2.7\% & 1.69$\times$ & 512 $\to$ 51.2 \\
     8{,}192 & 22.52 & 21.38 & 5.1\% & 2.21$\times$ & 1024 $\to$ 102.4 \\
    15{,}715 & 23.54 & 21.60 & 8.3\% & 2.64$\times$ & 1964 $\to$ 196.5 \\
    \bottomrule
  \end{tabular}
  \caption{Decode throughput and KV memory after compression. \emph{attn} is the speedup of the attention kernel alone, and \emph{gain} is the time reduction of the whole decode step.}
  \label{tab:supp-overhead}
\end{table}

%% file: supplementary/sections/B-implementation.tex
\section{Implementation Details}
\label{supp:impl}

\subsection{Format-Token Judgment Rules}
\label{supp:formatrules}

The one-token peek of \methodname{} (Section~3.2 in the main paper) identifies format tokens by the syntactic structure of the partially generated text rather than by a fixed word list. The classifier builds a character-level format mask over the text and projects it onto token boundaries. A token is content if the characters it covers include at least one alphanumeric character not marked as format, and format if they include none, so whitespace and punctuation tokens automatically count as format.

Three layers fill the mask. The document-structure layer marks a code-fence line such as \texttt{\textasciigrave\textasciigrave\textasciigrave python} as format in its entirety, including the language tag, and marks list markers and header markers at the start of a line as format while leaving what follows them as content. A region opened by a fence remains a code region even before the fence is closed, and it extends to the end of the text generated so far. Even without a fence, the whole text is treated as a code region when its first line starts with a declaration such as \texttt{def} or \texttt{import}. The prose function-word layer marks the articles \texttt{the}, \texttt{a}, and \texttt{an} outside code regions as format, which keeps the signal from landing on an article. The code-syntax layer sweeps code regions with a parser, marking only keywords, operators, and delimiters as format and leaving identifiers, literals, strings, and comments as content. In \texttt{import numpy} the only format token is \texttt{import}, so the peek stops at \texttt{numpy}, and in a function definition only \texttt{def} is format, so the function name and its arguments remain content.

This layer uses the tree-sitter parser. We cap the length the peek generates on its own (32 tokens in our setting). If no content token appears before the cap is reached, the peek falls back to the signal at the first decode step, with no format skipping. Per-benchmark peek step counts are given in Table~\ref{tab:supp-peeksteps}.

%% file: supplementary/sections/C-reproducibility.tex
\section{Reproducibility}
\label{supp:repro}

\subsection{Hyperparameters}
\label{supp:repro:hparams}

The configuration of \methodname{} splits into a part shared across all benchmarks and a part set by the operating point. The shared part has three components. The peek stops at the first content token it meets, which takes a median of 2 extra steps (Table~\ref{tab:supp-peeksteps}), and the search is capped at 32 tokens for the case where no content token appears (Section~\ref{supp:formatrules}). The recent window of 8 tokens at the tail of the prefill is always kept. The per-group budget floor is $r = 8$. In addition, the length $S$ of the sink span excluded from head scoring does not vary across benchmarks either, but it differs between the models. Llama uses the constant 8, Mistral computes it per item, and the rule and its rationale follow below. The part set by the operating point comprises the target average budget $b$ and the ceiling $c$ of Section~3.4 in the main paper, and we set $c$ to the reference prefill length of the workload. Table~\ref{tab:supp-hparams} gives the values that produced each cell of Table~2 in the main paper.

\begin{table}[t]
  \centering
  \small
  \begin{tabular}{@{}llrrrc@{}}
    \toprule
    Model & Benchmark & $b$ & $c$ & $r$ & $S$ \\
    \midrule
    \multirow{4}{*}{Llama-3.1-8B} & RULER-16k & 1581 & 15689 & 8 & \multirow{4}{*}{8} \\
                                  & LongBench & 131 & 8878 & 8 & \\
                                  & BFCL v3 & 150 & 716 & 8 & \\
                                  & HumanEval & 85 & 184 & 8 & \\
    \midrule
    \multirow{4}{*}{Mistral-7B} & RULER-16k & 1607 & 15854 & 8 & \multirow{4}{*}{per item} \\
                                & LongBench & 134 & 9759 & 8 & \\
                                & BFCL v3 & 169 & 765 & 8 & \\
                                & HumanEval & 96 & 186 & 8 & \\
    \bottomrule
  \end{tabular}
  \caption{Per-benchmark hyperparameters of \methodname{}. $b$ is the target average budget, $c$ the ceiling, $r$ the per-group floor on kept tokens, and $S$ the sink span length.}
  \label{tab:supp-hparams}
\end{table}

\begin{figure*}[t]
  \centering
  \includegraphics[width=\textwidth]{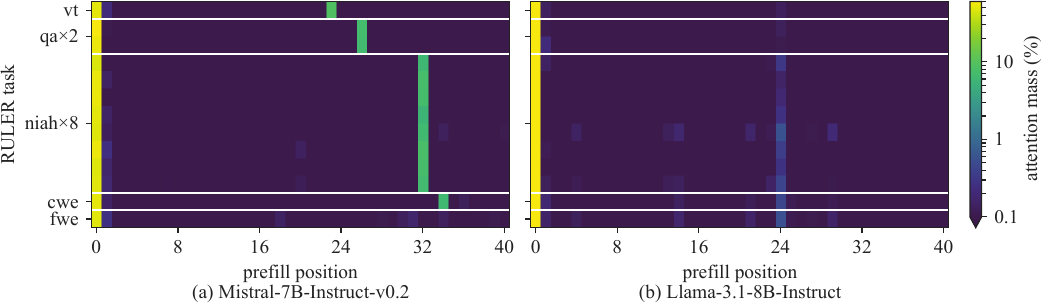}
  \caption{Attention mass over prefill positions for the 13 RULER tasks. Panel (a) is Mistral and (b) is Llama, the color scale is logarithmic, and the darkest cells are below 0.1\%.}
  \label{fig:sinktwopeak}
\end{figure*}

\subsubsection{Determining the Sink Span}
The head score of Eq.~(5) in the main paper is computed over the prefill positions after removing the attention-sink span at the front. The rule that sets the length $S$ of this span differs between the two models. For Llama it is the constant $S = 8$. Mistral uses no constant and computes the span per item, setting $S = j + 1$ where $j$ is the index of the first prompt token that contains a newline, and excluding the first $S$ positions. That is, everything through the last token of the first line, line break included, drops out of the signal candidates. If no newline appears within the first 64 tokens, the rule falls back to the Llama value of 8.

This rule exists because Mistral places a second attention sink in addition to the BOS. Figure~\ref{fig:sinktwopeak} shows this. We measure the attention mass over prefill positions at the step where the peek actually reads its signal, and sort the rows by the position of the first newline in each task. In both models the BOS at position 0 is the brightest cell. The second-brightest cell, however, appears only in Mistral, and its position shifts from 23 to 34 across tasks.

Task format, not item content, fixes the position of this peak. Within a task the peak sits at the same place across items, because RULER opens every prompt of a task with a fixed instruction that is identical up to the first newline regardless of the item. That place is the single cell right before the first newline. In 12 of the 13 RULER tasks this one cell takes 5.5\% to 8.9\% of the attention mass. The remaining task has its only newline at the tail of the prompt, so no peak appears near the front, and it is the bottom row of the figure. In Llama even the largest cell over the same span stays at background level, 0.13\% to 0.59\%, so the ranges of the two models do not overlap. Since the head score is the maximum over this span, leaving this one cell in the signal would let prompt format rather than input content decide the head ranking.

We ran the same probe on Mistral-7B-Instruct-v0.3 and on the Llama-3.1-8B base model, and the results matched those of each family's instruct model (4.2\% to 7.1\% for v0.3 and 0.04\% to 0.18\% for the base model). The property is therefore neither specific to Mistral v0.2 nor produced by instruction tuning.

This boundary applies only to the ranking signal and plays no part in the mask that decides retention and eviction. Sink tokens are not erased from the cache but only dropped from the candidates when scoring heads, so the memory accounting does not change.

\begin{figure*}[t]
  \centering
  \includegraphics[width=\textwidth]{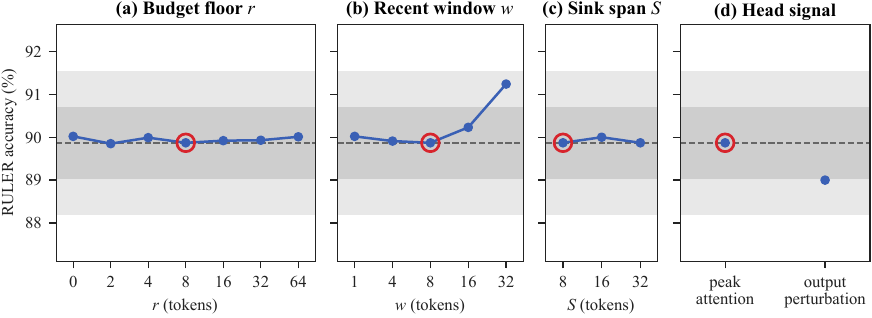}
  \caption{Hyperparameter sensitivity on RULER at a 10\% budget. Rings mark the defaults and bands are $\pm1$/$\pm2$ binomial standard errors around them.}
  \label{fig:hparamsens}
\end{figure*}

\subsubsection{Sensitivity to Hyperparameters}
We measure how sensitive \methodname{} is to the hyperparameters of Section~\ref{supp:repro:hparams} on RULER-16k with Llama-3.1-8B at the 10\% budget, varying one hyperparameter at a time around the operating point (Figure~\ref{fig:hparamsens}). The budget floor $r$ and the sink span $S$ matter least. Sweeping $r$ from 0 to 64 moves the 13-task average by far less than the binomial standard error of the benchmark. The realized memory also stays essentially unchanged, because the slope of the score-proportional allocation, Eq.~(8) in the main paper, compensates for $r$. Even removing the floor altogether costs nothing, which bears out the claim of Section~3.2 in the main paper that low-scoring groups remain adequate with only the recent window and a few top-scoring tokens. The sink span behaves the same way, since it only drops candidates from the head score and never touches the retention mask. Only the recent window $w$ moves the result, and even there the gain remains modest. We nevertheless keep both $w$ and $r$ at small defensive defaults rather than tuned ones, because at the low budgets of the other workloads larger settings could leave too little budget for the heads the peek identifies. Replacing the peak-attention head signal of Eq.~(5) in the main paper with the renormalized output perturbation, the same machinery the token scores use, does not help either.

\subsection{Computing Infrastructure}
\label{supp:repro:infra}

\begin{table}[t]
  \centering
  \small
  \begin{tabular}{@{}l p{0.27\columnwidth} p{0.27\columnwidth}@{}}
    \toprule
    Component & Cluster & Workstation \\
    \midrule
    Role & Benchmark runs & Overhead measurement \\
    \midrule
    GPU & RTX A6000 (48\,GB) & RTX 3090 Ti (24\,GB) \\
    CPU & AMD EPYC 9354 / 7413, Intel Xeon Gold 6348 & Intel Core Ultra 7 265K \\
    System memory & 503\,GB & 62\,GB \\
    Operating system & Linux 5.14 & Linux 7.1 \\
    Python & 3.12.8 & 3.14.6 \\
    PyTorch (CUDA) & 2.12.1 (12.6) & 2.11.0 (12.8) \\
    Transformers & 5.13.0 & 5.14.1 \\
    \bottomrule
  \end{tabular}
  \caption{Specifications of the two execution environments used for the experiments.}
  \label{tab:supp-infra}
\end{table}

The experiments ran in two environments with different roles, and Table~\ref{tab:supp-infra} lists them side by side. Benchmark runs took place on a shared GPU cluster. CPUs and system memory differ across its nodes, but the ones our experiments ran on share the same GPU and software stack. A single run loads the whole model onto one GPU and does not shard it across devices. The throughput and memory measurements of Section~\ref{supp:results:overhead} were taken separately on a standalone workstation, not on the cluster.

\subsection{Determinism and Number of Runs}
\label{supp:repro:runs}

All generation uses greedy decoding, and the numbers the paper reports come from a single run per condition. Every benchmark experiment in the paper ran through a single harness built on PyTorch and Hugging Face Transformers.

%% file: supplementary/sections/D-results.tex
\section{Additional Experimental Results}
\label{supp:results}

\subsection{Results at Additional Budget Levels}
\label{supp:results:budget}

Table~2 in the main paper reports each benchmark at a single budget point. To check that this point was not chosen in our favor, we extend the budget range on three benchmarks, adding 3\% and 5\% on LongBench, 30\% and 40\% on BFCL, and 50\% on HumanEval. The first point of each benchmark is the operating point of Table~2, carried over without re-measurement. RULER is not repeated here because Figure~4 in the main paper already plots a six-point curve.

The table and the figure show the same data. The BFCL value is the average of the three per-category values that the main paper lists separately, that is, the accuracy over all 300 items. In the figure, solid lines denote \methodname{}, dashed lines denote the methods that allocate the budget unevenly across heads, and dotted lines denote the methods that evict tokens without head-level allocation. The vertical axis range differs across panels because a shared axis would flatten the LongBench curves.

\begin{figure*}[t]
  \centering
  \includegraphics[width=\textwidth]{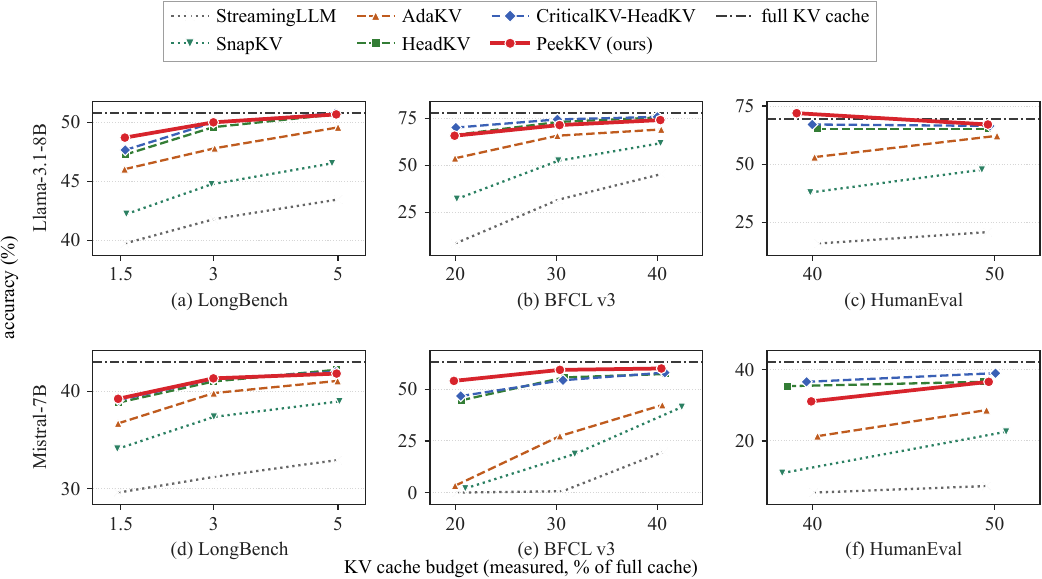}
  \caption{Accuracy as the budget varies. The horizontal axis is the \emph{measured} KV cache size rather than the target budget.}
  \label{fig:suppbudgetgrid}
\end{figure*}

\begin{table*}[t]
  \centering
  \small
  \setlength{\tabcolsep}{4pt}
  \begin{tabular*}{\textwidth}{@{\extracolsep{\fill}}l ccc c ccc c cc@{}}
    \toprule
    & \multicolumn{3}{c}{LongBench} & & \multicolumn{3}{c}{BFCL v3} & & \multicolumn{2}{c}{HumanEval} \\
    \cmidrule(lr){2-4}\cmidrule(lr){6-8}\cmidrule(lr){10-11}
    KV cache budget & 1.5\% & 3\% & 5\% & & 20\% & 30\% & 40\% & & 40\% & 50\% \\
    \midrule
    \multicolumn{11}{@{}l}{\textit{(a) Llama-3.1-8B-Instruct}}\\
    \addlinespace[1pt]
    Full KV cache & \multicolumn{3}{c}{50.8} & & \multicolumn{3}{c}{77.7} & & \multicolumn{2}{c}{69.5} \\
    \cmidrule(lr){1-11}
    StreamingLLM & 39.7 & 41.8 & 43.5 & & 8.3 & 31.7 & 45.3 & & 15.9 & 20.7 \\
    SnapKV & 42.2 & 44.7 & 46.5 & & 32.0 & 52.3 & 61.7 & & 37.8 & 47.6 \\
    AdaKV & 46.0 & 47.8 & 49.6 & & 53.7 & 65.7 & 69.0 & & 53.0 & 62.2 \\
    HeadKV$^\dagger$ & 47.3 & 49.6 & 50.7 & & 66.0 & 73.0 & 75.3 & & 65.2 & 65.2 \\
    CriticalKV-HeadKV$^\dagger$ & 47.6 & 50.0 & 50.8 & & 70.0 & 74.3 & 75.7 & & 67.1 & 66.5 \\
    \methodname{} & 48.7 & 50.0 & 50.7 & & 65.7 & 71.3 & 74.0 & & 72.0 & 67.1 \\
    \midrule
    \multicolumn{11}{@{}l}{\textit{(b) Mistral-7B-Instruct-v0.2}}\\
    \addlinespace[1pt]
    Full KV cache & \multicolumn{3}{c}{43.0} & & \multicolumn{3}{c}{63.0} & & \multicolumn{2}{c}{42.1} \\
    \cmidrule(lr){1-11}
    StreamingLLM & 29.6 & 31.2 & 32.9 & & 0.0 & 0.7 & 19.7 & & 5.5 & 7.3 \\
    SnapKV & 34.1 & 37.4 & 39.0 & & 2.0 & 18.7 & 41.3 & & 11.0 & 22.6 \\
    AdaKV & 36.7 & 39.8 & 41.1 & & 3.3 & 27.3 & 42.3 & & 21.3 & 28.7 \\
    HeadKV$^\dagger$ & 38.8 & 41.0 & 42.2 & & 44.7 & 55.7 & 57.3 & & 35.4 & 36.6 \\
    CriticalKV-HeadKV$^\dagger$ & 39.1 & 41.2 & 42.0 & & 46.7 & 54.3 & 58.0 & & 36.6 & 39.0 \\
    \methodname{} & 39.2 & 41.3 & 41.8 & & 54.0 & 59.3 & 60.0 & & 31.1 & 36.6 \\
    \bottomrule
  \end{tabular*}
  \caption{Accuracy (\%) across budget levels. $^\dagger$Requires offline profiling.}
  \label{tab:supp-budget}
\end{table*}

Figure~\ref{fig:suppbudgetgrid} and Table~\ref{tab:supp-budget} show that the standings observed at the operating points change little as the budget grows by a factor of two or three. The one exception is the 5\% budget on LongBench, the only point at which \methodname{} moves from leading to trailing, and there the top three methods are indistinguishable. On LongBench, \methodname{} scores highest on both models at the tightest budget. As the budget grows, the three methods with head-level allocation converge, while the methods without it do not catch up. A per-item paired bootstrap supports this reading. \methodname{} is significantly ahead of AdaKV, SnapKV, and StreamingLLM at all three budgets on both models, whereas against the two HeadKV-family methods the difference is significant only at 1.5\% on Llama, where \methodname{} leads. Once the budget is ample, the gap to the offline-profiled methods closes, while the gap to AdaKV, which allocates its budget from a prefill signal, remains.

On BFCL and HumanEval the advantage splits by model, as the main paper describes, and doubling the budget does not overturn the split. On BFCL, \methodname{} leads on Mistral and CriticalKV-HeadKV leads on Llama at all three budgets, and both gaps shrink as the budget grows. At 20\% on Mistral, StreamingLLM and SnapKV score near zero, and AdaKV collapses with them despite allocating its budget across heads. Head-level allocation alone is therefore not enough, and the signal that drives the allocation matters as well.

Across the table as a whole, \methodname{} leads every method that requires no offline profiling at every budget, and holds a narrow edge over the two offline-profiled methods, leading in more of the comparisons than it trails.

\subsection{Budget Allocation across Heads}
\label{supp:results:alloc}

\begin{figure*}[t]
  \centering
  \includegraphics[width=\textwidth]{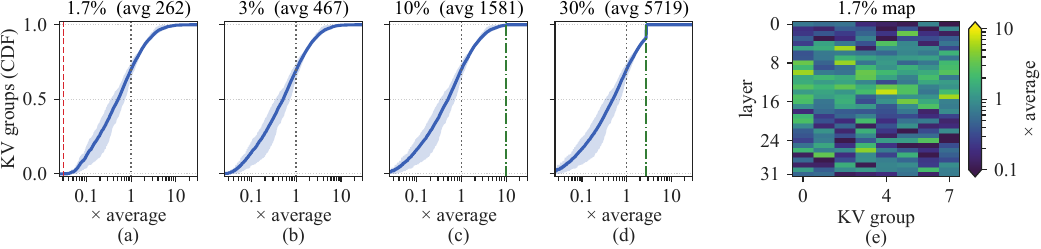}
  \caption{The budget \methodname{} assigns to each KV group on RULER-16k, as a multiple of the target average budget. Dashed lines mark the floor, dotted lines the uniform share, and dash-dotted lines the ceiling.}
  \label{fig:peekalloc}
\end{figure*}

Figure~\ref{fig:peekalloc} shows the actual shape of the budget that \methodname{} distributes across the head axis. The unit of allocation is the KV group, 256 for Llama-3.1-8B, and each group receives its budget $c_g$ through the score-proportional allocation of Section~3.4 in the main paper. A fixed set of RULER-16k items is shown at four of the budget levels of Figure~4 in the main paper. Panels (a) through (d) show the per-budget CDFs, and panel (e) maps the allocation onto the layer-and-group grid. All values are multiples of the target budget $b$; in the map, brighter cells receive more than the uniform share and darker cells less. The four allocations derive from a signal measured once, since the peek precedes compression and the budget enters only the allocation stage that follows. Each allocation matches the per-item memory of the run behind the corresponding point of Figure~4 to within 0.33\%.

The allocation departs sharply from uniform. The top 10\% of groups take 44\% of the budget, and more than half of the groups fall below half of the uniform share. The ranking of the groups shifts from item to item, a variation traced by the pale 10--90\% bands around the CDFs.

At 1.7\%, 3\%, and 10\% this shape merely scales with the budget, and the group ranking stays unchanged. What changes the shape is the ceiling $c$. At 30\% the ceiling binds for 8.7\% of the groups and flattens the concentration. The truncated share is not redistributed, so the realized mean falls to 82.7\% of the target. This flattening points in the same direction as Figure~4 in the main paper, where the gaps between methods narrow as the budget grows. A budget near the ceiling leaves little room to divide it differently across heads.

\subsection{Computational Overhead and Throughput}
\label{supp:results:overhead}

\methodname{} lets the model continue generating briefly before the budget is set and computes its scores at that point. This section measures what the procedure costs and what it returns, split into the computational cost of the selection step and the decode throughput after compression.

\subsubsection{Cost of the Selection Step and the Gram Expansion}

The Gram expansion of Eq.~\eqref{eq:supp-gram} lowers the cost per candidate token from $O(d\,d_h)$ to $O(d_h^2)$. Measured directly, the saving is a factor of 6.66 at $N=1024$ candidate tokens and 12.44 at $N=16384$, and working memory shrinks by a factor of 17.5, from 1121\,MiB to 64\,MiB at $N=16384$. The measured factors fall short of the theoretical $d/d_h=32$ because the matrix multiplications on the Gram path are small and run at lower hardware efficiency, whereas the explicit projection is one large multiplication and runs at high efficiency. The two paths compute the same value algebraically, but their floating-point operations run in a different order, so the scores differ minutely, at the limit of numerical precision. As a result, the top-10\% selection is identical for $N \le 8192$, while at $N = 16384$ ranks near the boundary swap and the overlap is 99.99\% (at most 1 token out of the top 1{,}638 per head in the measured sample).

The expansion accelerates only one of the three steps that make up a compression. A compression runs the peek, then the scoring, then the eviction itself, and the expansion touches only the scoring. At the longest measured context of 15{,}715 tokens the scoring shrinks from 535.6\,ms to 44.6\,ms, a factor of 12.0, which makes the whole compression, with the other two steps unchanged, 4.5 times cheaper. Both factors grow with context length, since the cost the expansion removes is proportional to the number of candidate tokens, and at 2{,}048 tokens they are only 2.3 and 1.4.

\subsubsection{Net Throughput and Memory}

Table~\ref{tab:supp-overhead} shows the throughput and KV memory after compression, measured on an RTX 3090 Ti running Llama-3.1-8B at batch 1 with the retention rate set to 10\%. Since this measurement concerns speed rather than accuracy, the head scores are synthesized rather than read from a benchmark item, and their concentration is set so that the budget is as uneven across groups as Section~\ref{supp:results:alloc} measures it to be. Turning those scores into budgets goes through Eq.~(8) unchanged. With the longest prompt measured, 15{,}715 tokens, decode becomes 8.3\% faster and attention itself 2.64 times faster. That the gain stays in the single digits is a property of the operating point rather than a limitation of the implementation. Batch-1 decode is bound by the bandwidth of reading the model weights at every step, which leaves attention at only 13\% of the step, so even eliminating KV access entirely would yield at most about a 9\% gain. The shorter the context, the smaller the share KV holds and the smaller the gain. Compression also has to earn back its own cost. One compression is a one-time expense, 139\,ms at the longest prompt, and every decode step after it saves 1.95\,ms, so the expense is recovered after 71 generated tokens. In a 2{,}048-token context the one-time expense falls to 89.4\,ms but the per-step saving falls further, to 0.294\,ms, pushing the break-even to 304 tokens. Without the Gram expansion of the previous subsection the one-time expense rises to 630\,ms and the 71-token break-even moves to about 324.

\subsubsection{Peek Step Count in Practice}

\begin{table}[t]
  \centering
  \small
  \setlength{\tabcolsep}{5pt}
  \begin{tabular}{lrccc}
    \toprule
    & & \multicolumn{2}{c}{Peek steps} & Format \\
    \cmidrule(lr){3-4}
    Benchmark & $n$ & Median & Max & skips \\
    \midrule
    RULER-16k & 1{,}300 & 2 & 4 & 1 \\
    LongBench & 3{,}750 & 1 & 13 & 0 \\
    BFCL v3 & 300 & 2 & 2 & 1 \\
    HumanEval & 164 & 5 & 6 & 4 \\
    \midrule
    All & 5{,}514 & 2 & 13 & 1 \\
    \bottomrule
  \end{tabular}
  \caption{Extra steps the peek generates before reaching the first content token. \emph{Format skips} is the median number of those steps classified as format and skipped.}
  \label{tab:supp-peeksteps}
\end{table}

Table~\ref{tab:supp-peeksteps} shows how many steps the peek actually generates before it reaches the first content token. On most benchmarks one or two steps are enough. Only HumanEval stands out with a median of 5, because code generation emits a long run of newlines and indentation right after the prompt and takes longer to reach a content token. This cost is therefore better reported per benchmark than lumped into a single range. The KV is not yet compressed during these steps, which is why peak memory does not shrink.